\documentclass{article}





\usepackage[final, nonatbib]{neurips_2019}

\usepackage[utf8]{inputenc} 
\usepackage[T1]{fontenc}    
\usepackage{hyperref}       
\usepackage{url}            
\usepackage{booktabs}       
\usepackage{amsfonts}       
\usepackage{nicefrac}       
\usepackage{microtype}      

\usepackage{amsmath}  
\usepackage{xcolor}

\usepackage{tikz}
\usetikzlibrary{arrows}

\newcommand{\Real}{\mathbb{R}}

\newcommand{\diag}{{\rm diag}}

\newcommand{\tnorm}[1]{\|#1\|_2}





\newtheorem{theorem}{Theorem}[section]

\newenvironment{proof}[1][Proof]{\begin{trivlist}
\item[\hskip \labelsep {\bfseries #1}]}{\end{trivlist}}

\newcommand{\bfzero}{\mathbf{0}}

\newcommand{\qed}{\nobreak \ifvmode \relax \else
      \ifdim\lastskip<1.5em \hskip-\lastskip
      \hskip1.5em plus0em minus0.5em \fi \nobreak
      \vrule height0.75em width0.5em depth0.25em\fi}

\newcommand{\herm}{^{\rm H}}





\def\beq{\begin{equation}}
\def\eeq{\end{equation}}
\def\beqstar{\begin{equation*}}
\def\eeqstar{\end{equation*}}

\newcommand{\Th}{\Theta}

\newcommand{\ones}{\mathbf{1}}

\newcommand{\mcX}{\mathcal{X}}

\newcommand{\hbf}{\mathbf{h}}
\newcommand{\zb}{\mathbf{z}}
\newcommand{\tim}{^{(k)}}



\def\bm{\begin{matrix}}
\def\bbm{\begin{bmatrix}}
\def\ebm{\end{bmatrix}}


\newcommand{\norm}[1]{\|#1\|}
\newcommand{\Ab}{\mathbf{A}}
\newcommand{\Ib}{\mathbf{I}}
\newcommand{\Tb}{\mathbf{T}}
\newcommand{\Wb}{\mathbf{W}}
\newcommand{\Fb}{\mathbf{F}}
\newcommand{\bb}{\mathbf{b}}
\newcommand{\hb}{\mathbf{h}}
\newcommand{\xb}{\mathbf{x}}
\newcommand{\yb}{\mathbf{y}}
\newcommand{\Cb}{\mathbf{C}}

\newcommand{\Db}{\mathbf{D}}
\newcommand{\Thb}{\mathbf{\Theta}}
\newcommand{\thb}{\mathbf{\theta}}
\DeclareMathOperator*{\I}{\mathbf{I}}

\newcommand{\dhid}{n} 
\newcommand{\mbbR}{\mathbb{R}}
\newcommand{\din}{m}
\newcommand{\dout}{p}
\newcommand{\fb}{\mathbf{f}}
\newcommand{\cb}{\mathbf{c}}
\newcommand{\tran}{^{\text{\sf T}}}

\title{Input-Output Equivalence of Unitary and Contractive RNNs}

%

\author{
  Melikasadat Emami\\
  Dept. ECE\\
  UCLA\\
  \texttt{emami@ucla.edu}\\
  \And 
  Mojtaba Sahraee-Ardakan\\
  Dept. ECE\\
  UCLA\\
  \texttt{msahraee@ucla.edu}\\
  \And 
  Sundeep Rangan\\
  Dept. ECE\\
  NYU\\
  \texttt{srangan@nyu.edu}\\
  \And 
  Alyson K. Fletcher\\
  Dept. Statistics\\
  UCLA\\
  %
}

\begin{document}
\maketitle
\begin{abstract}
Unitary recurrent neural networks (URNNs) 
have been proposed as a method to overcome
the vanishing and exploding gradient problem 
in modeling data with long-term dependencies. 
A basic question is
how restrictive is the
unitary constraint on the possible input-output
mappings of such a network?
This work 
shows that for any contractive RNN 
with ReLU activations, there is a
URNN with at most twice the number of hidden
states and the identical input-output
mapping.  Hence, with ReLU activations,
URNNs are as expressive
as general RNNs.  In contrast, 
for certain smooth activations, it is shown 
that the input-output mapping of an RNN
cannot be matched with a URNN, even with an arbitrary
number of states.  The theoretical results are supported by experiments on modeling of
slowly-varying dynamical systems.
\end{abstract}

\section{Introduction}
Recurrent neural networks (RNNs)
-- originally proposed in the late 1980s
\cite{rumelhart1988learning,elman1990finding}
-- refer to a widely-used and powerful class of models for
time series and sequential data.
In recent years,
RNNs have become particularly important in
speech recognition 
\cite{graves2013speech,hinton2012deep}
and natural language processing 
\cite{collobert2011natural,bahdanau2014neural,sutskever2014sequence} tasks.

A well-known challenge in training 
recurrent neural networks is the \emph{vanishing}
and \emph{exploding} gradient problem
\cite{bengio1993problem,pascanu2013difficulty}.
RNNs have a transition matrix 
that maps the hidden state at one time to the next time.
When the transition matrix has an induced norm
greater than one, the RNN may become unstable.
In this case, small 
perturbations of the input at some time 
can result in a
change in the output 
that grows exponentially over the subsequent time.
This instability leads to a
so-called exploding gradient.
Conversely, when the norm is less than one,
perturbations can decay exponentially so inputs
at one time have negligible effect in the 
distant future.
As a result, the loss surface associated with 
RNNs can have steep walls that may be 
difficult to minimize.
Such problems are particularly acute in systems
with long-term dependencies, where the output sequence
can depend strongly on the input sequence many time steps in the past.

Unitary RNNs (URNNs) \cite{arjovsky2016unitary} 
is a simple and commonly-used 
approach to mitigate the vanishing and
exploding gradient problem. The basic idea
is to restrict the
transition matrix to be unitary (an orthogonal 
matrix for the real-valued case).
The unitary transitional matrix is then combined with
a non-expansive activation such as a ReLU or sigmoid.
As a result, the overall transition mapping cannot
amplify the hidden states, thereby eliminating the
exploding gradient problem.
In addition, since all the singular
values of a unitary matrix equal 1, the
transition matrix does not attenuate the hidden
state, potentially mitigating the vanishing gradient
problem as well.  (Due to activation, the hidden state
may still be attenuated).
Some early work in URNNs suggested that they could 
be more effective than other methods, 
such as long short-term memory (LSTM) architectures
and standard RNNs, for certain 
learning tasks involving long-term dependencies  \cite{jing2017tunable, arjovsky2016unitary} -- see a short
summary below.

Although URNNs may improve the stability of the network
for the purpose of optimization, a basic issue
with URNNs is that the unitary
contraint may potentially reduce the set of 
input-output mappings that the network can model.
This paper seeks to rigorously
characterize \emph{how restrictive
the unitary constraint is on an RNN}\@.
We evaluate this restriction by comparing the set
of input-output mappings achievable with URNNs
with the set of mappings from all RNNs.
As described below, we restrict our attention
to RNNs that are contractive in order to avoid
unstable systems.

We show three key results:
\begin{enumerate}
\item Given any contractive RNN with $n$ hidden states
and ReLU activations, there exists a URNN with at most
$2n$ hidden states and the identical input-ouput mapping.

\item This result is tight in the sense that, given
any $n > 0$, there exists at least one contractive RNN
such that any URNN with the same input-output mapping
must have at least $2n$ states.

\item The equivalence of
URNNs and RNNs depends on the activation.  For example,
we show that there exists a contractive
RNN with \emph{sigmoid} activations such that there
is no URNN with any finite number of states that exactly
matches the input-output mapping.
\end{enumerate}

The implication of this result is that, for RNNs with
ReLU activations, there is no loss in the 
\emph{expressiveness} of model when imposing the unitary
constraint.  As we discuss below,
the penalty is a two-fold increase in the 
number of parameters.

Of course, the expressiveness
of a class of models is only one factor
in their real performance.  
Based on these results alone,
one cannot determine if URNNs will outperform
RNNs in any particular task.  
Earlier works have found examples where URNNs offer
some benefits over LSTMs and RNNs~\cite{arjovsky2016unitary,wisdom2016full}.
But in the simulations below concerning
modeling slowly-varying nonlinear dynamical
systems, we see that
URNNs with $2n$ states perform approximately
equally to RNNs with $n$ states.  

Theoretical results on generalization error are an active subject area in deep neural networks. Some measures of model complexity such as \cite{neyshabur2017exploring} are related to the spectral norm of the transition matrices. For RNNs with non-contractive matrices, these complexity bounds will grow exponentially with the number of time steps. In contrast, since unitary matrices can bound the generalization error, this work can also relate to generalizability.

\subsection*{Prior work}
The vanishing and exploding gradient problem
in RNNs has been known almost
as early as RNNs themselves
\cite{bengio1993problem,pascanu2013difficulty}.
It is part of a larger problem
of training models that can capture long-term
dependencies, and   
several proposed methods address this issue. 
Most approaches use some form of gate vectors to control the information flow inside the hidden states, the most
widely-used being LSTM networks~\cite{hochreiter1997long}.
Other gated models include Highway networks \cite{srivastava2015highway}
and gated recurrent units (GRUs)~\cite{Cho_2014}.
L1/L2 penalization on gradient norms and gradient clipping were proposed to solve the exploding gradient problem in \cite{pascanu2013difficulty}. With L1/L2 penalization, capturing long-term dependencies is still challenging since the regularization term quickly kills the information in the model.  A more recent work
\cite{pennington2018emergence} has successfully trained
very deep networks by carefully adjusting 
the initial conditions to impose an approximate
unitary structure of many layers. 

Unitary evolution RNNs (URNNs) are a more
recent approach first proposed in
\cite{arjovsky2016unitary}.
Orthogonal constraints were also considered in
the context of associative memories \cite{white2004short}.
One of the technical difficulties is to 
efficiently parametrize the set of unitary matrices.
The numerical simulations in this work focus
on relatively small networks, where the parameterization
is not a significant computational issue.
Nevertheless, for larger numbers of hidden states,
several approaches have been proposed.
The model in \cite{arjovsky2016unitary} parametrizes the transition matrix as a product of reflection, diagonal, permutation, and Fourier transform matrices. This model spans a subspace of the whole unitary space, thereby limiting the expressive power of RNNs.  The work
\cite{wisdom2016full} overcomes this issue by optimizing over full-capacity unitary matrices.
A key limitation in this work, however, is that the projection of weights on to the unitary space is not computationally efficient. 
A tunable, efficient parametrization of unitary matrices is proposed in \cite{jing2017tunable}. This model provides the computational complexity of $O(1)$ per parameter. The unitary matrix is represented as a product of rotation matrices and a diagonal matrix. By grouping specific rotation matrices, the model provides tunability of the span of the unitary space and enables using different capacities for different tasks. Combining the parametrization in \cite{jing2017tunable} for unitary matrices and the ``forget'' ability of the GRU structure, \cite{Cho_2014,jing2019gated} presented an architecture that outperforms conventional models in several long-term dependency tasks. Other methods such as orthogonal RNNs proposed by \cite{mhammedi2017efficient} showed that the unitary constraint is a special case of the orthogonal constraint. By representing an orthogonal matrix as a product of Householder reflectors, we are able span the entire space of orthogonal matrices. Imposing hard orthogonality constraints on the transition matrix limits the expressiveness of the model and speed of convergence and performance may degrade \cite{vorontsov2017orthogonality}. 


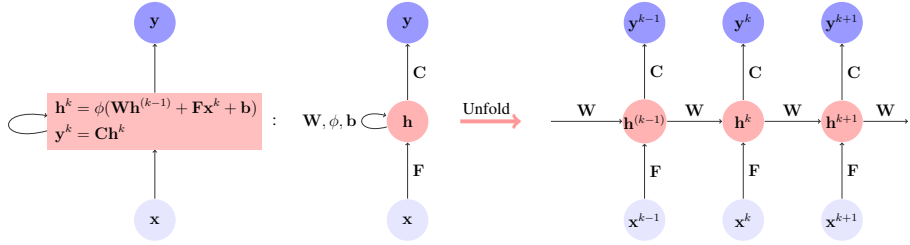
\begin{figure*}\centering
\scalebox{0.35}{
\begin{tikzpicture}[thick,->,draw=black!80, node distance=2.5cm, scale=1, every node/.style={scale=1.5, font=\large}]
\tikzstyle{every pin edge}=[<-,line width=3pt]
\tikzstyle{neuron}=[circle,fill=black!25,minimum size=30pt,inner sep=0pt]
\tikzstyle{input neuron}=[neuron, fill=blue!10];
\tikzstyle{hidden neuron}=[neuron, fill=red!30];
\tikzstyle{output neuron}=[neuron, fill=blue!40];
\tikzstyle{annot} = [text width=4em, text centered]
\tikzstyle{null} = [draw = none , fill= none]
\tikzstyle{line} = [draw, black, -latex']
\tikzstyle{block}=[rectangle,fill=red!25,minimum size=40pt]
\node [input neuron] (x) {$\xb$};
\node[block, above of=x](b) {$\begin{aligned}
     \hb^{k} &=\phi(\Wb \hb^{(k-1)} + \Fb \xb^k +\bb)\\  \yb^k &= \Cb \hb^k
\end{aligned}$};
\node [output neuron, above of=b] (y) {$\yb$};

\path (x) edge (b);
\path (b) edge (y);
\path[->] (b.185) edge [out=195, in=170,distance=2cm] node[] {}(b.177);

\node[annot, right of=b, node distance =3cm](mid1){:};

\node [hidden neuron, right of=mid1, node distance=3.4cm] (h) {$\hb$};
\node [input neuron, below of=h, node distance=2.5cm] (xx) {$\xb$};
\node [output neuron, above of=h] (yy) {$\yb$};

\path (xx) edge [right] node{$\Fb$} (h);
\path (h) edge [loop left] node{$\Wb, \phi, \bb$} (h);
\path (h) edge [right] node{$\Cb$} (yy);

\node[annot, right of=h, node distance=0.5cm](mid2){};
\node[null, right of=mid2, node distance=2.5cm](mid3){};
\path (mid2) edge [above, line width=1.5mm, draw=red!40] node[text width = 4em]{Unfold} (mid3);

\node [null, right of=mid3, node distance = 0.5cm](a1){};
\node [hidden neuron, right of=a1](ht-1) {$\hb^{(k-1)}$};
\node [hidden neuron, right of=ht-1] (ht) {$\hb^k$};
\node [hidden neuron, right of=ht] (ht+1) {$\hb^{k+1}$};

\node [input neuron, below of=ht-1] (xt-1) {$\xb^{k-1}$};
\node [input neuron, below of=ht] (xt) {$\xb^k$};
\node [input neuron, below of=ht+1] (xt+1) {$\xb^{k+1}$};

\node [annot, right of=ht+1](a2){};

\node [output neuron, above of=ht-1](yt-1) {$\yb^{k-1}$};
\node [output neuron, above of=ht] (yt) {$\yb^k$};
\node [output neuron, above of=ht+1] (yt+1) {$\yb^{k+1}$};

\path (xt-1) edge [right] node{$\Fb$} (ht-1);
\path (xt) edge [right] node{$\Fb$} (ht);
\path (xt+1) edge [right] node{$\Fb$}(ht+1);

\path (ht-1) edge [right] node{$\Cb$} (yt-1);
\path (ht) edge [right] node{$\Cb$} (yt);
\path (ht+1) edge [right] node{$\Cb$} (yt+1);

\path (ht-1) edge [above] node{$\Wb$}  (ht);
\path (ht) edge [above] node{$\Wb$}  (ht+1);
\path (a1) edge [above] node{$\Wb$}  (ht-1);
\path (ht+1) edge [above] node{$\Wb$}  (a2);
\end{tikzpicture}
}
\caption{Recurrent Neural Network (RNN) model.
\label{fig:rnn} }
\end{figure*}

\section{RNNs and Input-Output Equivalence}

\paragraph*{RNNs.}
We consider recurrent neural networks (RNNs) 
representing sequence-to-sequence mappings of the form
\begin{subequations} \label{eq:RNN}
\begin{align}
    &\hb^{(k)} = \phi(\Wb \hb^{(k-1)} + \Fb \xb^{(k)} + \bb), \quad \hb^{(-1)} = \hb_{-1}, 
    \label{eq:RNN_state} \\
    &\yb^{(k)} = \Cb \hb^{(k)}, \label{eq:RNN_output}
\end{align}
\end{subequations}
parameterized by 
$\Thb = (\Wb, \Fb, \bb, \Cb,\hb_{-1})$.
The system is shown in Fig.~\ref{fig:rnn}.
The system maps a sequence of inputs
$\xb^{(k)} \in \mbbR^{\din}$, $k=0,1,\ldots,T-1$
to a sequence of outputs $\yb^{(k)}
\in \mbbR^{\dout}$.
In equation~\eqref{eq:RNN},
$\phi$ is the activation function
(e.g. sigmoid or ReLU); 
$\hb^{(k)} \in \mbbR^{\dhid}$ is an internal
or hidden state;
$\Wb \in \mbbR^{\dhid \times \dhid}, \Fb \in \mbbR^{\dhid \times \din}$, and $\Cb \in \mbbR^{\dout \times \dhid}$ are the hidden-to-hidden, input-to-hidden, and hidden-to-output weight matrices respectively; and
$\bb$ is the bias vector.  
We have considered the initial condition,
$\hb_{-1}$, as part of the parameters,
although we will often take $\hb_{-1}=0$.
Given a set of parameters $\Thb$, 
we will let
\beq \label{eq:gtheta}
    \yb = G(\xb, \Thb)
\eeq
denote the resulting sequence-to-sequence mapping.
Note that the number of time samples, $T$,
is fixed throughout our discussion.

Recall \cite{strang1993introduction}
that a matrix 
$\Wb$ is \emph{unitary} if $\Wb\herm\Wb = \Wb\Wb\herm =\Ib$.
When a unitary matrix is real-valued,
it is also called \emph{orthogonal}.
In this work, 
we will restrict our attention to real-valued matrices,
but still use the term unitary for consistency with the URNN literature.
A \emph{Unitary RNN} or URNN is simply
an RNN \eqref{eq:RNN}
with a unitary state-to-state transition
matrix $\Wb$.
A key property of unitary 
matrices is that
they are \emph{norm-preserving}, meaning that
$\|\Wb\hb^{(k)}\|_2=\|\hb^{(k)}\|_2$.
In the context of \eqref{eq:RNN_state}, the unitary
constraint implies that the transition matrix
does not amplify the state.

\paragraph*{Equivalence of RNNs.}
Our goal is to understand the extent to which
the unitary constraint in a URNN
restricts the set of input-output mappings.
To this end, we say
that the RNNs for two parameters $\Thb_1$ and
$\Thb_2$ are \emph{input-output equivalent}
if the sequence-to-sequence mappings are identical,
\beq \label{eq:equiv}
    G(\xb,\Thb_1)=G(\xb,\Thb_2) \mbox{ for all }
    \xb=(\xb^{(0)},\ldots,\xb^{(T-1)}).
\eeq
That is, for all input sequences $\xb$, the two systems
have the same output sequence.  Note that the hidden
internal states $\hb^{(k)}$ in the two systems
may be different.  We will also say that 
two RNNs are \emph{equivalent on a set of $\mcX$}
of inputs if \eqref{eq:equiv} holds
for all $\xb \in \mcX$.

It is important to recognize that 
input-output equivalence does \emph{not}
imply that the parameters $\Thb_1$ and $\Thb_2$
are identical.  For example, consider the case
of linear RNNs where the activation in
\eqref{eq:RNN} is the identity,
$\phi(\zb) = \zb$.  Then, for any invertible
$\Tb$, the transformation
\beq \label{eq:Ttrans}
    \Wb \rightarrow \Tb\Wb\Tb^{-1},\quad
    \Cb \rightarrow \Cb\Tb^{-1}, \quad
    \Fb \rightarrow \Tb\Fb, \quad
    \hb_{-1} \rightarrow \Tb\hb_{-1},
\eeq
results in the same input-output mapping.
However, the internal states $\hb^{(k)}$
will be mapped to $\Tb\hb^{(k)}$.
The fact that many parameters can lead to 
identical input-output mappings will be key to finding
equivalent RNNs and URNNs.

\paragraph*{Contractive RNNs.}  The
\emph{spectral norm} \cite{strang1993introduction}
of a matrix $\Wb$ is 
the maximum gain of the matrix $\|\Wb\| := \max_{\hb \neq 0} \frac{\|\Wb\hb\|_2}{\|\hb\|_2}$.
In an RNN \eqref{eq:RNN},
the spectral norm $\|\Wb\|$ measures how much the 
transition matrix can amplify the hidden state.
For URNNs, $\|\Wb\|=1$.  We will say an
RNN is \emph{contractive} if $\|\Wb\|< 1$,
\emph{expansive} if $\|\Wb\|>1$, and
\emph{non-expansive} if $\|\Wb\|\leq 1$.
In the sequel, we will restrict our attention
to contractive and non-expansive RNNs.  
In general, given an expansive
RNN, we cannot expect to find an equivalent URNN\@.
For example, suppose $\hbf^{(k)}=h^{(k)}$ is scalar.
Then, the transition matrix $\Wb$ is also
scalar $\Wb=w$ and $w$ is expansive if and only if
$|w|>1$.  Now suppose 
the activation is a ReLU $\phi(h)=\max\{0,h\}$.
Then, it is possible that a constant
input $x^{(k)}=x_0$ can result in an
output that grows exponentially with time:
$y^{(k)} = \mathrm{const} \times w^k$.  
Such an exponential 
increase is not possible with a URNN\@. 
We consider only non-expansive RNNs
in the remainder of the paper.
Some of our results will also need the 
assumption that the activation 
function $\phi(\cdot)$ in \eqref{eq:RNN}
is non-expansive:
\begin{align*}
    \norm{\phi(\xb)- \phi(\yb)}_2 \leq \norm{\xb-\yb}_2,
    \qquad \mbox{for all $\xb$ and $\yb$}.
\end{align*}
This property is satisfied
by the two most common activations,
sigmoids  and ReLUs.

\paragraph*{Equivalence of Linear RNNs.}
To get an intuition of equivalence, it is useful
to briefly review the concept in the case
of linear systems \cite{kailath1980linear}.  
Linear systems are RNNs \eqref{eq:RNN}
in the special case where the
activation function is identity, $\phi(\zb)=\zb$;
the initial condition is zero, $\hb_{-1}=0$; and
the bias is zero, $\bb=0$.
In this case, it is well-known that
two systems are input-output equivalent 
if and only if they 
have the same \emph{transfer function},
\beq \label{eq:Hs}
    H(s) := \Cb(s\Ib-\Wb)^{-1}\Fb.
\eeq
In the case of scalar inputs
and outputs, $H(s)$ is a
rational 
function of the complex variable
$s$ with numerator and denominator degree
of at most $n$, the dimension of the hidden
state $\hb^{(k)}$.  Any state-space system
\eqref{eq:RNN} that achieves a particular transfer
function is called a \emph{realization}
of the transfer function.  Hence two linear
systems are equivalent if and only if they 
are the realizations of the same transfer function.

A realization is called \emph{minimal} if it
is not equivalent some linear system
with fewer hidden states.  
A basic property of realizations of linear systems
is that they
are minimal if and only if they are \emph{controllable}
and \emph{observable}.  The formal definition
is in any linear systems text, e.g. \cite{kailath1980linear}.  Loosely, controllable
implies that all internal states can be reached
with an appropriate input and 
observable implies that all hidden states can
be observed from the ouptut.  In absence
of controllability and observability, some
hidden states can be removed while maintaining
input-output equivalence.

\section{Equivalence Results for RNNs with ReLU Activations}

Our first results consider contractive RNNs 
with ReLU activations.
For the remainder of the section,
we will restrict our attention to the 
case of zero initial conditions,
$\hb^{(-1)}=0$ in \eqref{eq:RNN}. 

\begin{theorem}\label{thm:1}
Let $ \yb =  G (\xb, \Thb_c)$ be a contractive RNN with ReLU activation and states of dimension $n$. 
Fix $M > 0$ and let $\mcX$ be the set of all sequences such that $\tnorm{\xb\tim}\leq M < \infty$
for all $k$.  Then there exists a URNN with state dimension $2n$ and parameters 
$\Thb_u = (\Wb_u, \Fb_u, \bb_u, \Cb_u)$ such that for all $\xb\in \mcX$, $G(\xb,\Th_c) =  G (\xb, \Thb_u)$. Hence the input-output mapping is matched for bounded 
inputs.
\end{theorem}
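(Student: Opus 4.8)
The plan is to leave the original $n$ ReLU units essentially untouched in the ``top'' block of a $2n$-dimensional hidden state, and to use the extra $n$ ``bottom'' units only to absorb the contraction of $\Wb$ into an orthogonal transition matrix. The transition matrix I would use is the standard unitary (Halmos) dilation of the contraction $\Wb$,
\beq
  \Wb_u = \begin{bmatrix} \Wb & (\Ib - \Wb\Wb\tran)^{1/2} \\ (\Ib-\Wb\tran\Wb)^{1/2} & -\Wb\tran \end{bmatrix},
\eeq
which is well defined because $\|\Wb\|\le 1$ makes $\Ib-\Wb\Wb\tran$ and $\Ib-\Wb\tran\Wb$ positive semidefinite, and is orthogonal by the intertwining identity $\Wb(\Ib-\Wb\tran\Wb)^{1/2} = (\Ib-\Wb\Wb\tran)^{1/2}\Wb$ (which follows from $\Wb(\Ib-\Wb\tran\Wb)=(\Ib-\Wb\Wb\tran)\Wb$ and functional calculus). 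The top-left block of $\Wb_u$ is exactly $\Wb$, which is the whole point: if the bottom block of the hidden state can be forced to remain $\mathbf{0}$, then the top-block pre-activation reduces to $\Wb\hb^{(k-1)}+\Fb\xb^{(k)}+\bb$ regardless of the off-diagonal blocks of $\Wb_u$.

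So the crux is to silence the bottom block, and here is where ReLU and the boundedness of $\mcX$ enter. Writing the augmented state as $(\hb^{(k)},\mathbf{g}^{(k)})$, I would take $\Fb_u=\big[\begin{smallmatrix}\Fb\\ \mathbf{0}\end{smallmatrix}\big]$, $\Cb_u=[\,\Cb\ \ \mathbf{0}\,]$, and $\bb_u=\big(\begin{smallmatrix}\bb\\ \bb'\end{smallmatrix}\big)$ with $\bb'$ a sufficiently negative constant vector. First I would show the reachable states are uniformly bounded: since $\phi$ is non-expansive with $\phi(\mathbf{0})=\mathbf{0}$ and $\|\Wb\|=\rho<1$, the recursion gives $\tnorm{\hb^{(k)}}\le \rho\,\tnorm{\hb^{(k-1)}}+\tnorm{\Fb}M+\tnorm{\bb}$, so with $\hb^{(-1)}=\mathbf{0}$ every reachable $\hb^{(k)}$ lies in a ball of radius $B:=(\tnorm{\Fb}M+\tnorm{\bb})/(1-\rho)$. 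Let $\Rb:=(\Ib-\Wb\tran\Wb)^{1/2}$ be the bottom-left block of $\Wb_u$; then choosing each component $\bb'_i \le -\tnorm{\Rb_{i,:}}\,B$ guarantees that whenever $\mathbf{g}^{(k-1)}=\mathbf{0}$ the bottom pre-activation $\Rb\hb^{(k-1)}+\bb'$ is $\le \mathbf{0}$ componentwise, so ReLU sends it to $\mathbf{0}$.

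With these choices the verification is a single induction on $k$. The base case holds since $\hb^{(-1)}=\mathbf{0}$ and $\mathbf{g}^{(-1)}=\mathbf{0}$. For the step, $\mathbf{g}^{(k-1)}=\mathbf{0}$ kills the top-right coupling term, so the top block computes exactly $\phi(\Wb\hb^{(k-1)}+\Fb\xb^{(k)}+\bb)=\hb^{(k)}$, reproducing the original state, while the bias bound forces $\mathbf{g}^{(k)}=\mathbf{0}$; the output then matches because $\Cb_u(\hb^{(k)},\mathbf{0})=\Cb\hb^{(k)}$. The main obstacle is precisely the incompatibility of the componentwise ReLU with the orthogonal mixing in $\Wb_u$: a naive dilation feeds a nonzero bottom state back into the top block through the off-diagonal. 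The resolution---and the only place contractivity and the bound $M<\infty$ are truly needed---is that boundedness of the reachable set lets a constant negative bias park the extra coordinates permanently in the flat region of ReLU, decoupling the top dynamics from the dilation. I would also note that the argument uses only $\phi(z)=0$ for $z\le0$ together with non-expansiveness, and that one should check the edge case $\|\Wb\|=1$, where the square roots remain well defined so the same construction goes through.
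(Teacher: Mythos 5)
Your proposal is correct and follows essentially the same route as the paper's proof: augment the state to dimension $2n$, take a block-orthogonal transition whose top-left block is $\Wb$ and whose bottom-left block is $(\Ib-\Wb\tran\Wb)^{1/2}$, and use contractivity plus the input bound to choose a negative bias that keeps the extra $n$ coordinates pinned at zero by the ReLU --- the only difference being that you complete the orthogonal matrix explicitly via the Halmos dilation, where the paper abstractly extends the first $n$ orthonormal columns to an orthonormal basis of $\Real^{2n}$. One small caveat: your closing aside that the construction ``goes through'' when $\norm{\Wb}=1$ is not justified, since the reachable-set bound $B=(\tnorm{\Fb}M+\tnorm{\bb})/(1-\rho)$ degenerates as $\rho\to 1$; this does not affect the theorem, which assumes strict contractivity.
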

\begin{proof} See Appendix~\ref{sec:thm_relu_pf}.
\end{proof}

Theorem \ref{thm:1} shows that for any contractive RNN with ReLU activations, there exists a URNN with at most twice the number of hidden states and the
identical input-output mapping.
Thus, there is
no loss in the set of input-output mappings with URNNs
relative to general contractive RNNs on bounded inputs.

The penalty for using RNNs is the two-fold
increase in state dimension,
which in turn increases the number of parameters
to be learned.  We can estimate this increase
in parameters as follows:
The raw number of parameters for an RNN \eqref{eq:RNN}
with $n$ hidden states, $p$ outputs and $m$ inputs 
is $n^2+(p+m+1)n$.  However, for ReLU activations,
the RNNs are equivalent under the transformations
\eqref{eq:Ttrans} using diagonal positive $\Tb$.
Hence, the number of degrees of freedom of 
a general RNN is at most $d_{\rm rnn} = n^2+(p+m)n$.
We can compare this value to a URNN with $2n$
hidden states.
The set of $2n \times 2n$ unitary $\Wb$
has $2n(2n-1)/2$ degrees of freedom
\cite{stewart1980efficient}.  Hence, the total
degrees of freedom in a URNN with $2n$ states
is at most $d_{\rm urnn} = n(2n-1) + 2n(p+m)$.
We conclude that a URNN with $2n$ hidden states
has slightly fewer than twice the number of parameters
as an RNN with $n$ hidden states.

We note that there are cases that the contractivity assumption is limiting, however, the limitations may not always be prohibitive. We will see in our experiments that imposing the contractivity constraint can improve learning for RNNs when models have sufficiently large numbers of time steps. Some related results where bounding the singular values help with the performance can be found in \cite{vorontsov2017orthogonality}.


We next show a converse result.
\begin{theorem}\label{thm:relu_conv}
For every positive $n$, 
there exists a contractive RNN with ReLU nonlinearity and state dimension $n$ such that every equivalent URNN has at least $2n$ states.
\end{theorem}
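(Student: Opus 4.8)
The plan is to exhibit a single explicit $n$-state contractive ReLU RNN whose input–output map, on the set $\mcX$, coincides with that of a strictly stable \emph{linear} system of degree exactly $n$, and then argue that a unitary transition combined with a ReLU cannot reproduce such a map in fewer than $2n$ coordinates. For the construction I would take $\Wb_c = \diag(\lambda_1,\dots,\lambda_n)$ with distinct $\lambda_i \in (0,1)$, a scalar input with $\Fb_c = \ones$, a generic output $\Cb_c = \cb\tran$, and zero bias. Restricting attention to the positive-cone subfamily of $\mcX$ (inputs with nonnegative entries and $\tnorm{\xb\tim}\le M$), all preactivations stay nonnegative, the ReLU acts as the identity, and the map reduces to the linear system with impulse response $h_k = \sum_i c_i \lambda_i^k$. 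Distinct poles together with generic $c_i$ force the associated Hankel matrix to have rank exactly $n$, so this is a minimal degree-$n$ realization; and $\norm{\Wb_c} = \max_i \lambda_i < 1$ makes it contractive. Since any URNN that is equivalent on all of $\mcX$ is in particular equivalent on this subfamily, it suffices to rule out small URNNs against the linear target $h_k$.

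Suppose then that a URNN $\Thb_u = (\Wb_u,\Fb_u,\bb_u,\Cb_u)$ with $d$ states is equivalent on $\mcX$. Two structural facts drive the argument: every reachable hidden state of a ReLU RNN lies in the nonnegative orthant $\mbbR_{\ge 0}^{d}$ (because $\phi(\cdot)\ge 0$ and $\hb^{(-1)}=0$), and $\Wb_u$ is norm-preserving. Near an interior input, where all units have a frozen activation pattern, the URNN is an affine system whose effective transition matrix is the compression $P\Wb_u P$ of the unitary $\Wb_u$ onto the active-coordinate subspace $\mathrm{Ran}(P)$; such a compression is automatically a contraction, $\norm{P\Wb_u P}\le \norm{\Wb_u}=1$. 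For the local map to reproduce the degree-$n$ response, this compression must have the $n$ distinct interior poles $\lambda_i$ among its eigenvalues, and hence must act on a subspace of dimension at least $n$, giving $d \ge n$.

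The factor two comes from the fact that the target contraction has \emph{full defect}: every singular value of $\Wb_c$ is strictly below one, so $\Ib - \Wb_c\tran\Wb_c$ has rank $n$. By the Halmos/Sz.-Nagy dilation principle, a unitary whose compression realizes a contraction of defect $n$ (matching its first $T$ powers, with $T$ the fixed horizon) must act on a space at least $n$ dimensions larger than the contraction's, i.e. $d \ge 2n$. I would make this concrete through the reachable set: by positive homogeneity of the zero-bias ReLU, positive and negative excitations produce hidden trajectories supported on the disjoint coordinate sets $\{\,i:(\Fb_u)_i>0\,\}$ and $\{\,i:(\Fb_u)_i<0\,\}$, and I would show that each of these sign-branches must span an $n$-dimensional observable subspace in order to carry the two independent degree-$n$ responses that the full defect forces, yielding $2n$ distinct coordinates in total.

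The hard part will be controlling the \emph{in-trajectory} ReLU truncations and the possibly nonzero bias $\bb_u$ of an adversarial equivalent URNN. The clean "compression of a unitary" picture holds only while the activation pattern is frozen, whereas the exponential decay of $h_k$ must be produced precisely by truncation events, so I cannot assume the URNN stays in one linear regime. I expect to reduce to the bias-free, fixed-pattern case by working with small-amplitude inputs and exploiting homogeneity, and then to use norm-preservation of $\Wb_u$ to argue that any coordinate that is ever truncated cannot contribute to the persistent decaying response, so that sustaining a degree-$n$ decay on each sign-branch genuinely consumes $n$ untruncated coordinates per branch. Closing this gap — ruling out that the two sign-branches share coordinates while still matching the strictly contractive response — is where the real work lies.
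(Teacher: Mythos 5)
There is a genuine gap, and it sits exactly at the step that is supposed to produce the factor of two. Your argument runs: the target $\Wb_c$ has full defect ($\mathrm{rank}(\Ib-\Wb_c\tran\Wb_c)=n$); the URNN's frozen-pattern effective transition matrix $A = P\Wb_u P$ must have the $n$ interior poles $\lambda_i$ among its eigenvalues; therefore $A$ must also have defect $n$, whence $d \ge 2n$ by the dilation principle. The middle inference is false. Input--output equivalence only forces the minimal realization of the local linear system to be \emph{similar} to $\Wb_c$, and similarity preserves eigenvalues but not singular values or defect rank; and interior eigenvalues simply do not force large defect. Concretely, let $A = R_\theta\,\diag(1,s)$ where $R_\theta$ is the rotation by $\theta$, with $\cos\theta = 0.8$ and $s = 0.1$: this real matrix satisfies $A\tran A = \diag(1, s^2)$, so it is a contraction with singular values $(1,\,0.1)$ and defect rank $1$, yet its characteristic polynomial $\lambda^2 - 0.88\lambda + 0.1$ has two distinct real roots ($\approx 0.746$ and $\approx 0.134$), both in $(0,1)$. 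Such a matrix dilates to a unitary on $\mbbR^3$, not $\mbbR^4$. So matching $n$ distinct strictly stable poles pins the defect of $A$ no higher than $1$, and your mechanism yields at best $d \ge n+1$. Ruling out these low-defect compressions is exactly the ReLU feasibility question (which coordinates stay nonnegative, which get truncated when, what the bias does) that you defer, and which you yourself flag as ``where the real work lies'' --- so the proposal's one load-bearing quantitative step is invalid and the admitted gap is the whole theorem. A secondary error: the claim that positive and negative excitations produce hidden trajectories supported on the disjoint sets $\{i : (\Fb_u)_i > 0\}$ and $\{i : (\Fb_u)_i < 0\}$ holds only at the first time step; afterwards $\Wb_u$ mixes coordinates arbitrarily.

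For contrast, the paper's proof needs none of this machinery. For $n=1$ it observes that a $1$-state URNN has $w_u = \pm 1$ exactly; a sufficiently large positive input drives both the contractive RNN and the URNN into the active ReLU phase for all time, where both are genuinely linear, and matching these linear maps forces $w_c = \pm 1$, contradicting $|w_c|<1$. The general case is the $n$-fold diagonal copy of this scalar system, with each independent coordinate requiring two URNN states. Your construction (distinct poles, scalar input, positive-cone linearization) and the operator-dilation framing are genuinely different from the paper, but as it stands the proposal does not prove the theorem.
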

\begin{proof}
See Appendix~\ref{sec:relu_conv_pf} in the Supplementary Material.
\end{proof}
The result shows that the $2n$ achievability
bound in Theorem~\ref{thm:1} is tight, 
at least in the worst case.
In addition, the RNN 
constructed in the proof of Theorem~
\ref{thm:relu_conv} is not particularly
pathological.
We will show in our simulations
in Section~\ref{sec:sim}
that URNNs typically need twice the number
of hidden states to achieve comparable modeling
error as an RNN\@.

\section{Equivalence Results for RNNs with Sigmoid Activations}

Equivalence
between RNNs and URNNs depends on the particular
activation.  Our next result shows that
with sigmoid activations, URNNs are, in general,
never exactly equivalent to RNNs, even
with an arbitrary number of states.

We need the following technical definition:
Consider an RNN \eqref{eq:RNN} with 
a standard sigmoid activation $\phi(z)=1/(1+e^{-z})$.
If $\Wb$ is non-expansive, then a
simple application of the contraction mapping
principle shows that for any constant 
input $x\tim=x^*$,
there is a fixed point in the hidden state 
$\hb^* = \phi(\Wb\hb^* + \Fb\xb^* + \bb)$.
We will say that the RNN is controllable
and observable at $\xb^*$ if the linearization
of the RNN around $(\xb^*,\hb^*)$ is controllable
and observable.

\begin{theorem}\label{thm:sig_conv}
There exists a contractive RNN 
with sigmoid activation function $\phi$ with
the following property:  If a
URNN is controllable and observable
at any point $\xb^*$, then the URNN
cannot be equivalent to the RNN
for inputs $\xb$ in the neighborhood of
$\xb^*$.
\end{theorem}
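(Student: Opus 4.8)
The plan is to exhibit a single scalar ($\dhid = 1$) contractive RNN — take $\Wb = w = \tfrac12$, $\Fb = f = 1$, $\bb = 0$, $\Cb = c = 1$, with the standard sigmoid $\phi(z) = 1/(1+e^{-z})$ — and to show that its response to constant inputs already rules out every controllable–observable URNN. Since $|w|<1$, the map $h \mapsto \phi(wh + fx^* + b)$ is a contraction for every constant level $x^*$, so a unique fixed point $h^*(x^*)$ exists and the RNN is non-expansive. First I would record what local input–output equivalence buys us: if a URNN agrees with this RNN for all $\xb$ in a neighborhood of a constant sequence at level $x^*$, then (i) the two linearizations about the corresponding fixed points have the same transfer function, and (ii) the two constant-input steady-state maps $x \mapsto y^*(x)$ agree on an interval around $x^*$.

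Step two uses (i) together with the controllability/observability hypothesis. The linearization of the scalar RNN about $(x^*, h^*)$ has transition scalar $a = \phi'(z^*)w$ with $z^* = wh^* + fx^* + b$; since $\phi' \in (0,\tfrac14]$ and $w, f, c \neq 0$, the pole $a$ and its residue are nonzero, so the transfer function has degree exactly one. A URNN that is controllable and observable at $x^*$ has a minimal linearization, whose dimension therefore equals the degree of the (shared) transfer function, namely one. Hence the URNN must itself be one–dimensional, and its $1\times1$ unitary transition is forced to be $\Wb_u = w_u \in \{+1,-1\}$.

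Step three extracts the contradiction from (ii). For any scalar sigmoid RNN the steady-state relation $\phi^{-1}(y/c) = w(y/c) + fx + b$ inverts explicitly to
\[
  x(y) \;=\; \frac{1}{f}\,\phi^{-1}(y/c) \;-\; \frac{w}{fc}\,y \;-\; \frac{b}{f},
  \qquad \phi^{-1}(t) = \ln\frac{t}{1-t} = \ln y - \ln(c-y).
\]
Writing the same identity for the one-dimensional URNN gives $x(y) = \frac{1}{f_u}\phi^{-1}(y/c_u) - \frac{w_u}{f_u c_u}\,y - \frac{b_u}{f_u}$ with $w_u = \pm1$, and on the interval furnished by (ii) the two expressions must coincide. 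The functions $\ln y$, $\ln(c-y)$, $\ln(c_u - y)$, $y$, $1$ are linearly independent on any interval, so matching the $\ln y$ coefficient forces $f = f_u$, matching the remaining logarithmic singularities forces $c = c_u$, and the linear term then gives $-w/(fc) = -w_u/(f_u c_u)$, i.e. $w = w_u = \pm 1$, contradicting $|w| = \tfrac12 < 1$. Therefore no controllable–observable URNN can be equivalent to this RNN near any $x^*$.

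I expect the reduction in step one to be the main obstacle to make fully rigorous. Claim (i) requires arguing that two systems with identical input–output maps on a neighborhood have identical first-order (Fréchet) derivatives at the operating point, so that the linearized systems — and hence their transfer functions — agree; this is where smoothness of the sigmoid is used, and it is what licenses invoking minimal-realization theory to pin the URNN dimension. Claim (ii) needs the fixed point to depend continuously on the input and to remain in the open interval $(0,c)$ on which the logit inversion is valid, together with the fact that agreement of two real-analytic steady-state curves on an interval forces agreement of all the coefficients above. The remaining algebra is routine once these two reductions are in place.
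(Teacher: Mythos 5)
Your proposal is correct and shares its skeleton with the paper's proof: both use a scalar contractive sigmoid RNN, pass to fixed points under constant input, and use controllability/observability together with minimal-realization theory applied to the linearizations to force the URNN to be one-dimensional with $w_u \in \{+1,-1\}$. Where you genuinely diverge is the endgame. The paper derives the contradiction by matching \emph{both} the eigenvalues of the linearized transition maps, $\phi'(w_c h_c^* + x^*)\,w_c = \phi'(h_u^* + f_u x^* + b_u)$, and the fixed-point outputs $h_c^* = c_u h_u^*$, viewed as functions of $x^*$ on a neighborhood, and then "takes derivatives" of this pair of identities — a computation it does not actually carry out ("one can show"). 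You instead drop the eigenvalue identity entirely and work only with the constant-input steady-state curve: inverting $y^*(x)$ through the logit gives $x(y) = \tfrac{1}{f}\bigl[\ln y - \ln(c-y)\bigr] - \tfrac{w}{fc}\,y - \tfrac{b}{f}$, and linear independence of $\ln y$, $\ln(c-y)$, $\ln(c_u-y)$, $y$, $1$ forces $c = c_u$, $f = f_u$, and finally $w = w_u = \pm 1$, contradicting $|w| = \tfrac12$. (Minor ordering point: when $c \neq c_u$ the coefficient $-1/f$ of $\ln(c-y)$ is already nonzero, so $c = c_u$ comes first, then $f = f_u$, then the linear term.) Your endgame buys explicitness — every step is checkable, whereas the paper's differentiation argument is left to the reader — and it uses the controllability/observability hypothesis only once, at $x^*$, rather than on a neighborhood as the paper does. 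What both arguments share, and what you correctly flag as the real remaining work, are the two reductions: that finite-horizon input-output equivalence near a constant input yields (i) matching transfer functions of the linearizations and (ii) matching steady-state curves. The paper assumes these just as informally as you do (exact fixed-point matching strictly requires either initialization at the fixed point or letting $T \to \infty$, since from $\hb^{(-1)}=0$ the states only approach the fixed points geometrically), so your proposal is at the same level of rigor as the published proof while being more complete at the final step.
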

\begin{proof}  See Appendix~\ref{sec:sig_conv_pf} in the Supplementary Material.
\end{proof}

The result provides a 
converse on equivalence:  Contractive 
RNNs with sigmoid activations are not 
in general equivalent to URNNs, even if we allow
the URNN to have an arbitrary number of hidden
states.  Of course, the approximation error
between the URNN and RNN may go to zero
as the URNN hidden dimension goes to infinity
(e.g., similar to the approximation results in
\cite{funahashi1993approximation}).
However, exact equivalence is not possible 
with sigmoid activations, unlike with ReLU activations.
Thus, there is fundamental difference
in equivalence for smooth and non-smooth
activations.

We note that the  fundamental distinction between Theorem \ref{thm:1} and the opposite result in Theorem~\ref{thm:sig_conv} is that the activation is smooth with a positive slope.  With such
activations, you can linearize the system,
and the eigenvalues of the transition matrix become visible in the input-output mapping.  In contrast,
ReLUs can zero out states and
suppress these eigenvalues.
This is a key insight
of the paper and a further contribution
in understanding nonlinear systems.

\section{Numerical Simulations} \label{sec:sim}
In this section, we numerically 
compare the modeling ability of RNNs
and URNNs where the true system is a
contractive RNN with long-term dependencies.
Specifically,
we generate data from multiple instances
of a synthetic RNN 
where the parameters in \eqref{eq:RNN}
are randomly generated.  For the true system,
we use $m=2$
input units, $p=2$ output units,
and $n=4$ hidden units at each time step.
The matrices $\Fb$, $\Cb$ and $\bb$
are generated as i.i.d.\ Gaussians.
We use a random transition matrix,
\beq \label{eq:Wrand}
    \Wb = \Ib-\epsilon\Ab\tran\Ab/\|\Ab\|^2,
\eeq
where $\Ab$ is Gaussian i.i.d.\ matrix
and $\epsilon$ is a small value, taken here
to be $\epsilon=0.01$.  The matrix
\eqref{eq:Wrand} will be contractive
with singular values in $(1-\epsilon,1)$.
By making $\epsilon$ small, the states
of the system will vary slowly, hence
creating long-term dependencies.
In analogy with linear systems, the time
constant will be approximately 
$1/\epsilon=100$ time steps.
We use ReLU activations.  To avoid
degenerate cases where the outputs are
always zero, the biases $\bb$
are adjusted to ensure that the each hidden
state is on some target $60\%$ of the time
using a similar procedure as in 
\cite{fletcher2018inference}.

The trials have $T=1000$ time steps,
which corresponds to 10 times the time constant $1/\epsilon=100$ of the system.
We added noise to the output of this system such that the signal-to-noise ratio (SNR) is 15 dB or 20 dB\@.   In each trial, we generate
700 training samples and 300 test sequences
from this system.  

Given the input and the output data of this contractive RNN, we attempt to
learn the system with: (i) standard
RNNs, (ii) URNNs, and (iii) LSTMs.
The hidden states in the model are varied
in the range 
$n = [2,4,6,8,10,12,14]$, which include
values both above and below the true
number of hidden states $n_{\rm true}=4$.
We used mean-squared error as the loss function.  
Optimization is performed using Adam \cite{kingma2014adam} optimization with
a batch size = 10 and learning rate = 0.01.
All models are implemented in the
Keras package in Tensorflow.
The experiments are done over 30 realizations of the original contractive system.

For the URNN learning,
of all the proposed algorithms for enforcing the unitary constraints on transition matrices during training \cite{jing2017tunable, wisdom2016full,arjovsky2016unitary,mhammedi2017efficient},
we chose to project the transition matrix on the full space of unitary matrices after each iteration using singular value decomposition (SVD).
Although SVD requires $\mathcal{O}(n^3)$ computation for each projection, for our choices of hidden states it performed faster than the aforementioned methods.

Since we have training noise and since
optimization algorithms can get stuck
in local minima, we cannot expect
``exact" equivalence between the learned
model and true system as in the theorems.  
So, instead,
we look at the test error as a measure
of the closeness of the learned model
to the true system.
Figure~\ref{fig:rsq_snr20} on the left shows the test $R^2$ for a Gaussian i.i.d.\ input and output with SNR = 20 dB for RNNs, URNNs, and LSTMs. The red dashed line corresponds to the optimal $R^2$ achievable at the given noise level.  

Note that even though the true RNN has $n_{\rm true}=4$ hidden states, the RNN
model does not obtain the optimal
test $R^2$ at $n=4$.  This is not due
to training noise, since the RNN is able
to capture the full dynamics  when we over-parametrize the system to $n\approx 8$
hidden states.  The test error in the
RNN at lower numbers of hidden states is likely
due to the optimization being caught
in a local minima.  

What is important for this work though is to compare the URNN test error 
with that of the RNN\@.  We observe
that URNN requires approximately twice
the number of hidden states to obtain
the same test error as achieved by an
RNN\@.  To make this clear,
the right plot shows the same performance data with number of states adjusted for URNN\@.
Since our theory indicates that a URNN with $2n$ hidden states is as powerful as an RNN with $n$ hidden states, 
we compare a URNN with $2n$ hidden units directly with an RNN with $n$ hidden units. We call this the adjusted hidden units.
We see that the URNN and RNN have similar
test error when we appropriately
scale the number of hidden units
as predicted by the theory.

For completeness, the left plot in Figure~\ref{fig:rsq_snr20} also shows
the test error with an LSTM\@.
It is important to note that the URNN has almost the same performance as an LSTM with considerably smaller number of parameters.

Figure \ref{fig:rsq_snr15} shows similar 
results for the same task with SNR = 15 dB\@. For this task, the input is \emph{sparse} Gaussian i.i.d., i.e.\ Gaussian with some probability $p=0.02$ and $0$ with probability $1-p$. The left plot shows the $R^2$ vs.\ the number of hidden units for RNNs and URNNs and the right plot shows the same results once the number of hidden units for URNN is adjusted.

We also compared the modeling ability of URNNs and RNNs using the Pixel-Permuted MNIST task. Each MNIST image is a $28 \times 28$ grayscale image with a label between 0 and 9.  A fixed random permutation is applied to the pixels and each pixel is fed to the network in each time step as the input and the output is the predicted label for each image \cite{arjovsky2016unitary,jing2017tunable, vorontsov2017orthogonality}.

We evaluated various models on the Pixel-Permuted MNIST task using validation based early stopping. Without imposing a contractivity constraint during learning, the RNN is either unstable or requires a slow learning rate. Imposing a contractivity constraint improves
the performance.  Incidentally, using a URNN improves the performance further. Thus, contractivity can improve learning
for RNNs when models have sufficiently large numbers of time steps.

\begin{figure}
    \centering
    \includegraphics[width=13cm]{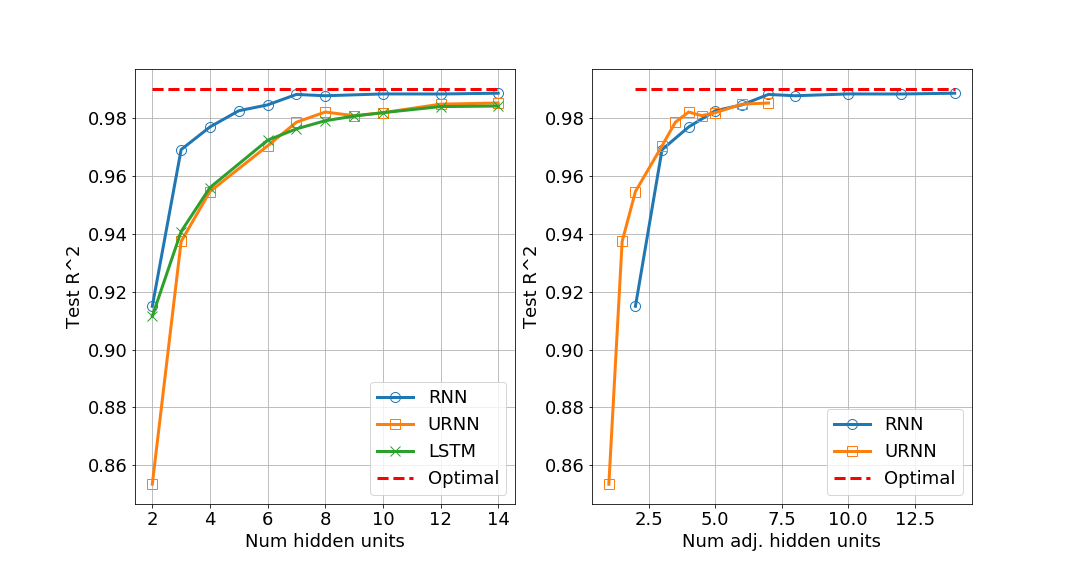}
    \caption{Test $R^2$ on synthetic data
    for a Gaussian i.i.d.\ input and output SNR=20~dB.}
    \label{fig:rsq_snr20}
\end{figure}

\begin{figure}
    \centering
    \includegraphics[width=13cm]{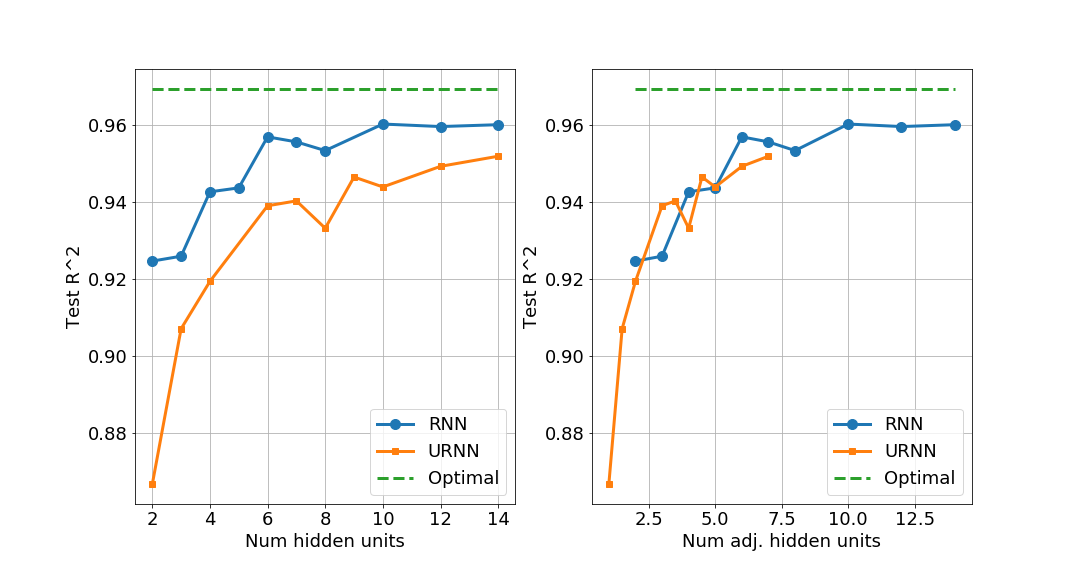}
    \caption{Test $R^2$ on synthetic data
    for a Gaussian i.i.d.\ input and output SNR=15~dB.}
    \label{fig:rsq_snr15}
\end{figure}

\begin{figure}
\centering
    \includegraphics[scale = 0.4]{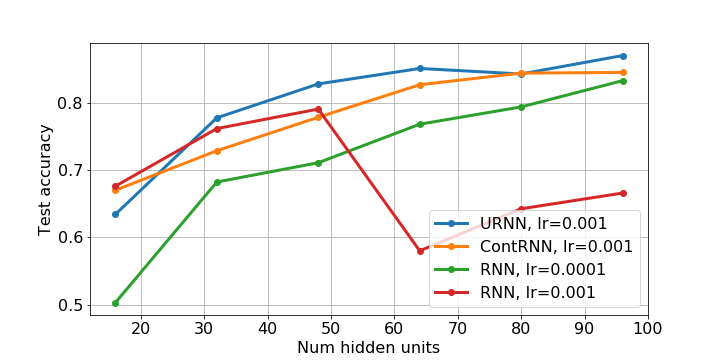}
    \caption{
    Accuracy on Permuted MNIST task for various models trained with
    RMSProp, validation-based
    early termination, and
     initial learning rate {\tt lr}.
    (1) URNN model: RNN model
    with unitary constraint;
    (2) ContRNN:
    RNN with a contractivity constraint;
    (3 \& 4) RNN model with no contractivity
    or unitary constraint (two learning rates).
    We see contractivity improves
    performance, and unitary constraints
    improve performance further.
    }
    \label{fig:pmnist}
\end{figure}

\section{Conclusion}
Several works empirically show that using unitary recurrent neural networks improves the stability and performance of the RNNs. In this work, we study how restrictive it is to use URNNs instead of RNNs. We show that URNNs are at least as powerful as contractive RNNs in modeling input-output mappings if enough hidden units are used. More specifically, for any contractive RNN we explicitly construct a URNN with twice the number of states of the RNN and identical input-output mapping. We also provide converse results for the number of state and the activation function needed for exact matching. We emphasize that although it has been shown that URNNs outperform standard RNNs and LSTM in many tasks that involve long-term dependencies, our main goal in this paper is to show that from an approximation viewpoint, URNNs are as expressive as general contractive RNNs. By a two-fold increase in the number of parameters, we can use the stability benefits they bring for optimization of neural networks.

\section*{Acknowledgements}
The work of M. Emami, M. Sahraee-Ardakan, A.\ K.\ Fletcher was supported in part by the National Science Foundation under Grants 1254204 and 1738286, and the Office of Naval Research under Grant N00014-15-1-2677. S. Rangan was supported in part by the National Science Foundation under Grants 1116589, 1302336, and 1547332, NIST, the industrial affiliates of NYU WIRELESS, and the SRC.

\appendix
\section{Proof of Theorem~\ref{thm:1}}
\label{sec:thm_relu_pf}

The basic idea is to construct a URNN with $2n$ states such that first $n$ states match the states of RNN and the last $n$ states are always zero. 
To this end, consider any contractive RNN,
\begin{align*}
    \hb_c\tim = \phi(\Wb_c \hb_c^{(k-1)} + \Fb_c \xb\tim + \bb_c), \quad
    \yb\tim = \Cb_c \hb_c\tim,
\end{align*}
where $\hb\tim\in \Real^n$.
Since $\Wb$ is contractive, we have $\|\Wb\|\leq \rho$
for some $\rho < 1$.  Also, for a ReLU activation,
$\|\phi(\zb)\|\leq \|\zb\|$ for all pre-activation
inputs $\zb$.  Hence, 
\begin{align*}
    \|\hb_c\tim\|_2 &=
    \| \phi(\Wb_c \hb_c^{(k-1)} + \Fb_c \xb\tim + \bb_c)\|_2 
    \leq \| \Wb_c \hb_c^{(k-1)}  + \Fb_c \xb\tim + \bb_c\|_2 \\
    &\leq \rho\|\hb_c^{(k-1)}\|_2 + \|\Fb_c\|\|\xb\tim\|_2
    + \|\bb_c\|_2.
\end{align*}
Therefore, with bounded inputs, $\|\xb\tim\|\leq M$,
we have the state is bounded,
\beq \label{eq:hbnd}
    \|\hb\tim\|_2 \leq \frac{1}{1-\rho}
    \left[ \|\Fb_c\|M + \|\bb_c\|_2 \right] =: M_h.
\eeq
We construct a URNN as,
\begin{align*}
    \hb_u\tim  = \phi(\Wb_u \hb_u^{(k-1)} 
    + \Fb_u \xb\tim + \bb_u), \quad
    \yb\tim = \Cb_u \hb_u\tim
\end{align*}
where the parameters are of the form,
\beq 
    \hb_u = \bbm \hb_1 \\ \hb_2 \ebm\in \Real^{2n},
    \quad
    \Wb_u = \bbm \Wb_1 , \Wb_2 \\ \Wb_3, \Wb_4 \ebm,
    \quad
    \Fb_u = \bbm \Fb_c \\ \bfzero \ebm,
    \quad \bb_u = \bbm \bb_c \\ \bb_2 \ebm.  
\eeq
Let $\Wb_1 = \Wb_c$. Since $\norm{\Wb_c}<1$, we have $\I - \Wb_c\tran\Wb_c \succeq 0$. Therefore, there exists $\Wb_3$ such that $\Wb_3\tran\Wb_3 = \I - \Wb_c\tran\Wb_c$. With this choice of $\Wb_3$, the first $n$ columns of $\Wb_u$ are orthonormal. Let $\bbm \Wb_2 \\ \Wb_4 \ebm$ extend these to an orthonormal basis for $\Real^{2n}$. Then, the matrix $\Wb_u$ will be orthonormal.

Next, let $\bb_2 = -M_h \ones_{\dhid \times 1}$, 
where $M_h$ is defined in \eqref{eq:hbnd}.
We show by induction that for all $k$,
\beq \label{eq:RNNequivalence}
 \hb_1\tim = \hb_c\tim,\quad \hb_2\tim = \bfzero.
\eeq
If both systems are initialized at zero, \eqref{eq:RNNequivalence} is satisfied at $k=-1$. 
Now, suppose this holds up to time $k-1$. Then,
 \begin{align*}
     \hb_1\tim &= \phi(\Wb_1 \hb_1^{(k-1)} + \Wb_2 \hb_2^{(k-1)}+ \Fb_c \xb\tim + \bb_c) \\
     &=
     \phi(\Wb_1 \hb_1^{(k-1)} + 
     \Fb_c \xb\tim + \bb_c)= \hb_c\tim,
 \end{align*}
where we have used the induction hypothesis that
$\hb_2^{(k-1)}=\bfzero$.
For $\hb_2\tim$, note that 
\begin{align}
    \norm{\Wb_3 \hb_1^{(k-1)}}_\infty \leq \norm{\Wb_3 \hb_1^{(k-1)}}_2 \leq \norm{\hb_1^{(k-1)} }
    \leq M_h,
\end{align}
where the last step follows from \eqref{eq:hbnd}.
Therefore,
\begin{align}
    \Wb_3 \hb_1^{(k-1)} + \Wb_4 \hb_2^{(k-1)} + \bb_2 = \Wb_3 \hb_1^{(k-1)} - M \ones_{\dhid \times 1} \leq \bfzero.
\end{align}
Hence with ReLU activation $\hb_2\tim = \phi(\Wb_3 \hb_1^{(k-1)} + \Wb_4 \hb_2^{(k-1)} + \bb_2) = \bfzero$.
By induction, \eqref{eq:RNNequivalence} holds
for all $k$.  Then, if we define
$\Cb_u = [\Cb_c \bfzero]$, we have the output
of the URNN and RNN systems are identical
\[
    \yb\tim_u = \Cb_u\hb_u\tim  = \Cb_c\hb_1\tim =
    \yb\tim_c.
\]
This shows that the systems are equivalent.

\bibliographystyle{plain}
\bibliography{ref}

\begin{thebibliography}{10}

\bibitem{arjovsky2016unitary}
Martin Arjovsky, Amar Shah, and Yoshua Bengio.
\newblock Unitary evolution recurrent neural networks.
\newblock In {\em International Conference on Machine Learning}, pages
  1120--1128, 2016.

\bibitem{bahdanau2014neural}
Dzmitry Bahdanau, Kyunghyun Cho, and Yoshua Bengio.
\newblock Neural machine translation by jointly learning to align and
  translate.
\newblock {\em arXiv}, pages arXiv--1409, 2014.

\bibitem{bengio1993problem}
Yoshua Bengio, Paolo Frasconi, and Patrice Simard.
\newblock The problem of learning long-term dependencies in recurrent networks.
\newblock In {\em IEEE International Conference on Neural Networks}, pages
  1183--1188. IEEE, 1993.

\bibitem{Cho_2014}
Kyunghyun Cho, Bart van Merrienboer, Dzmitry Bahdanau, and Yoshua Bengio.
\newblock On the properties of neural machine translation: Encoder–decoder
  approaches.
\newblock {\em Proceedings of SSST-8, Eighth Workshop on Syntax, Semantics and
  Structure in Statistical Translation}, 2014.

\bibitem{collobert2011natural}
Ronan Collobert, Jason Weston, L{\'e}on Bottou, Michael Karlen, Koray
  Kavukcuoglu, and Pavel Kuksa.
\newblock Natural language processing (almost) from scratch.
\newblock {\em Journal of Machine Learning Research}, 12(Aug):2493--2537, 2011.

\bibitem{elman1990finding}
Jeffrey~L Elman.
\newblock Finding structure in time.
\newblock {\em Cognitive Science}, 14(2):179--211, 1990.

\bibitem{fletcher2018inference}
Alyson~K Fletcher, Sundeep Rangan, and Philip Schniter.
\newblock Inference in deep networks in high dimensions.
\newblock In {\em Proc.\ IEEE International Symposium on Information Theory},
  pages 1884--1888. IEEE, 2018.

\bibitem{funahashi1993approximation}
Ken-ichi Funahashi and Yuichi Nakamura.
\newblock Approximation of dynamical systems by continuous time recurrent
  neural networks.
\newblock {\em Neural Networks}, 6(6):801--806, 1993.

\bibitem{graves2013speech}
Alex Graves, Abdel-rahman Mohamed, and Geoffrey Hinton.
\newblock Speech recognition with deep recurrent neural networks.
\newblock In {\em IEEE International Conference on Acoustics, Speech and Signal
  Processing}, pages 6645--6649. IEEE, 2013.

\bibitem{hinton2012deep}
Geoffrey Hinton, Li~Deng, Dong Yu, George Dahl, Abdel-rahman Mohamed, Navdeep
  Jaitly, Andrew Senior, Vincent Vanhoucke, Patrick Nguyen, Tara Sainath, and
  Brian Kingsbury.
\newblock Deep neural networks for acoustic modeling in speech recognition.
\newblock {\em IEEE Signal Processing Magazine}, 29, 2012.

\bibitem{hochreiter1997long}
Sepp Hochreiter and J{\"u}rgen Schmidhuber.
\newblock Long short-term memory.
\newblock {\em Neural Computation}, 9(8):1735--1780, 1997.

\bibitem{jing2019gated}
Li~Jing, Caglar Gulcehre, John Peurifoy, Yichen Shen, Max Tegmark, Marin
  Soljacic, and Yoshua Bengio.
\newblock Gated orthogonal recurrent units: On learning to forget.
\newblock {\em Neural Computation}, 31(4):765--783, 2019.

\bibitem{jing2017tunable}
Li~Jing, Yichen Shen, Tena Dubcek, John Peurifoy, Scott Skirlo, Yann LeCun, Max
  Tegmark, and Marin Solja{\v{c}}i{\'c}.
\newblock Tunable efficient unitary neural networks (eunn) and their
  application to rnns.
\newblock In {\em Proceedings of the 34th International Conference on Machine
  Learning-Volume 70}, pages 1733--1741. JMLR. org, 2017.

\bibitem{kailath1980linear}
Thomas Kailath.
\newblock {\em Linear systems}, volume 156.
\newblock Prentice-Hall Englewood Cliffs, NJ, 1980.

\bibitem{kingma2014adam}
Diederik~P Kingma and Jimmy Ba.
\newblock Adam: A method for stochastic optimization.
\newblock {\em arXiv preprint arXiv:1412.6980}, 2014.

\bibitem{mhammedi2017efficient}
Zakaria Mhammedi, Andrew Hellicar, Ashfaqur Rahman, and James Bailey.
\newblock Efficient orthogonal parametrisation of recurrent neural networks
  using householder reflections.
\newblock In {\em Proceedings of the 34th International Conference on Machine
  Learning-Volume 70}, pages 2401--2409. JMLR. org, 2017.

\bibitem{neyshabur2017exploring}
Behnam Neyshabur, Srinadh Bhojanapalli, David McAllester, and Nati Srebro.
\newblock Exploring generalization in deep learning.
\newblock In {\em Advances in Neural Information Processing Systems}, pages
  5947--5956, 2017.

\bibitem{pascanu2013difficulty}
Razvan Pascanu, Tomas Mikolov, and Yoshua Bengio.
\newblock On the difficulty of training recurrent neural networks.
\newblock In {\em International Conference on Machine Learning}, pages
  1310--1318, 2013.

\bibitem{pennington2018emergence}
Jeffrey Pennington, Samuel~S Schoenholz, and Surya Ganguli.
\newblock The emergence of spectral universality in deep networks.
\newblock {\em arXiv preprint arXiv:1802.09979}, 2018.

\bibitem{rumelhart1988learning}
David~E Rumelhart, Geoffrey~E Hinton, and Ronald~J Williams.
\newblock Learning representations by back-propagating errors.
\newblock {\em Cognitive Modeling}, 5(3):1, 1988.

\bibitem{srivastava2015highway}
Rupesh~Kumar Srivastava, Klaus Greff, and J{\"u}rgen Schmidhuber.
\newblock Highway networks.
\newblock {\em arXiv preprint arXiv:1505.00387}, 2015.

\bibitem{stewart1980efficient}
Gilbert~W Stewart.
\newblock The efficient generation of random orthogonal matrices with an
  application to condition estimators.
\newblock {\em SIAM Journal on Numerical Analysis}, 17(3):403--409, 1980.

\bibitem{strang1993introduction}
Gilbert Strang.
\newblock {\em Introduction to linear algebra}, volume~3.
\newblock Wellesley-Cambridge Press Wellesley, MA, 1993.

\bibitem{sutskever2014sequence}
Ilya Sutskever, Oriol Vinyals, and Quoc~V Le.
\newblock Sequence to sequence learning with neural networks.
\newblock In {\em Advances in Neural Information Processing Systems}, pages
  3104--3112, 2014.

\bibitem{vidyasagar2002nonlinear}
Mathukumalli Vidyasagar.
\newblock {\em Nonlinear systems analysis}, volume~42.
\newblock Siam, 2002.

\bibitem{vorontsov2017orthogonality}
Eugene Vorontsov, Chiheb Trabelsi, Samuel Kadoury, and Chris Pal.
\newblock On orthogonality and learning recurrent networks with long term
  dependencies.
\newblock In {\em Proceedings of the 34th International Conference on Machine
  Learning-Volume 70}, pages 3570--3578. JMLR. org, 2017.

\bibitem{white2004short}
Olivia~L White, Daniel~D Lee, and Haim Sompolinsky.
\newblock Short-term memory in orthogonal neural networks.
\newblock {\em Physical review letters}, 92(14):148102, 2004.

\bibitem{wisdom2016full}
Scott Wisdom, Thomas Powers, John Hershey, Jonathan Le~Roux, and Les Atlas.
\newblock Full-capacity unitary recurrent neural networks.
\newblock In {\em Advances in Neural Information Processing Systems}, pages
  4880--4888, 2016.

\end{thebibliography}

\pagebreak

{\bf \Large Supplementary Material}


\section{Converse Theorem Proofs} 
\label{sec:proofs}

\subsection{Proof of Theorem~\ref{thm:relu_conv}}
\label{sec:relu_conv_pf}

First consider the case when $n=1$ with
scalar inputs and outputs.
Let $\thb_c = (w_c, f_c, b_c, c_c)$ be the parameters
of a contractive RNN with $f_c=c_c=1$, $b_c=0$
and $w_c \in (0,1)$.
Hence, the contractive RNN is given by
\begin{align} \label{eq:rnnc_scalar}
    h_c\tim = \phi(w_c h_c^{(k-1)} + x\tim),
    \quad   y\tim = h_c\tim,
\end{align}
and $\phi(z)=\max\{0,z\}$ is the ReLU activation.
Suppose $\Thb_u$ are the parameters of an 
equivalent URNN\@.  If $\Thb$ has less than $2n=2$
states, it must have $n=1$ state. 
Let the equivalent URNN be 
\begin{align}
    h_u\tim &= \phi(w_uh_u^{(k-1)} + f_u x\tim + b_u),
    \quad
    y\tim = c_u h_u\tim,
\end{align}
for some parameters $\Thb_u=(w_u,f_u,b_u,c_u)$.
Since $w_u$ is orthogonal, either $w_u=1$
or $w_u=-1$.  Also, either $f_u > 0$ or $f_u < 0$.
First, consider the case when $w_u=1$ and $f_u > 0$.
Then, there exists a large enough input $x\tim$ such that for all time steps $k$, both systems are operating in the active phase of ReLU\@. Therefore, we have two equivalent linear systems,
\begin{align}
    \text{contractive RNN:}\quad h_c\tim &= w_c h_c^{(k-1)} + x\tim, \quad  y\tim = h_c\tim\\
    \text{URNN:}\quad h_u\tim &= h_u^{(k-1)} + f_u x\tim + b_u, \quad
    y\tim = c_u h_u\tim.
\end{align}
In order to have identical input-output mapping for these linear systems for all $x$, it is required that $w_c=1$, which is a contradiction. 
The other cases $w_c=-1$ and $f_u<0$ can be
treated similarly.
Therefore, at least $n=2$ states are needed for the URNN to match the contractive RNN with $n=1$
state.
 
For the case of general $n$, consider the 
contractive RNN,
\begin{align} \label{eq:RNN_stateb}
    \hb\tim = \phi(\Wb \hb^{(k-1)} + \Fb \xb\tim + \bb),  \quad 
    \yb\tim = \Cb \hb\tim,
\end{align}
where $\Wb = \diag(w_c, w_c, ..., w_c)$, 
$\Fb = \diag(f_c, f_c, ..., f_c)$, 
$\bb = b_c \ones_{\dhid \times 1}$, and 
$\Cb = \diag(c_c, c_c, ..., c_c)$. 
This system is separable in that if $\yb=G(\xb)$
then $y_i = G(x_i,\theta_c)$ for each 
input $i$.  A URNN system will need 2 states
for each scalar system requiring a total of
$2n$ states.

\subsection{Proof of Theorem~\ref{thm:sig_conv}}
\label{sec:sig_conv_pf}

We use the same scalar contractive RNN 
\eqref{eq:rnnc_scalar}, but with
a sigmoid activation $\phi(z)=1/(1+e^{-z})$.
Let $\Thb=(\Wb_u,\fb_u,\cb_u,\bb_u)$
be the parameters of any URNN with
scalar input and outputs.  Suppose
the URNN is controllable and observable
at an input value $x^*$.
Let $h_c^*$ and $\hb_u^*$ be, respectively, the
fixed points of the hidden states
for the contractive RNN and URNN:
\begin{align}
     \text{contractive RNN:} \quad h_c^* &= \phi(w_c h_c + x^* ), \quad\\
     \text{URNN:} \quad \hb_u^* &= 
     \phi(\Wb_u \hb_u + \fb_u x^* + \bb_u).
\end{align}
We take the linearizations \cite{vidyasagar2002nonlinear}
of each system around its fixed point and apply a small perturbation $\Delta x$ around $x^*$. Therefore, we have two linear systems with identical input-output mapping given by,
\begin{align}
    \text{contractive RNN:}\quad 
    \Delta h_c\tim  &= 
    d_c(w_c \Delta h^{(k-1)}+ \Delta x\tim),
    \quad y\tim = \Delta h_c\tim+h_c^*, \\
    \text{URNN:} \quad 
    \Delta \hb_u\tim &= \Db_u (\Wb_u \Delta \hb_u^{(k-1)} + \fb_u\tran \Delta x\tim  ),
    \quad 
    y\tim = \cb_u\tran \Delta \hb_u +    \cb_u\tran\hb_u^*,
\end{align}
where 
\[
    d_c = \phi'(z_c^*=w_ch_c^* + x^*), 
    \quad
    \Db_u = 
    \phi'(\Wb_u \hb_u^* + \fb_u x^* +\bb_u ),
\]
are the derivatives of the activations
at the fixed points.
Since both systems are controllable and observable, their dimensions must be the same and the eigenvalues of the transition 
matrix must match. 
In particular, the URNN must be scalar,
so $\Wb_u=w_u$ for some scalar $w_u$.
For orthogonality, either $w_u=1$
or $w_u=-1$.  We look at the $w_u=1$ case;
the $w_u=-1$ case is similar.
Since the eigenvalues of the transition
matrix must match we have,
\beq \label{eq:eval_comp}
    d_c w_c = d_u \Rightarrow 
    \phi'(w_c h_c^* + x^*) w_c =
    \phi'(h_u^* + f_u x^* +b_u).
\eeq
where $h_u^*$ and $h_c^*$ are the solutions
to the fixed point equations:
\beq \label{eq:hfix_comp}
    h_c^* = \phi(w_c h_c + x^* ), \quad    
      h_u^* = 
     \phi(h_u^* + f_b x^* + b_u).
\eeq
Also, since two systems have the same
output,
\beq \label{eq:ycomp}
    h_c^* = c_uh_u^*.
\eeq
Now, \eqref{eq:eval_comp} must hold
at any input $x^*$ where the
URNN is controllable and observable.
If the URNN is controllable and observable
at some $x^*$, it is
is controllable and observable in a 
neighborhood of $x^*$.
Hence, \eqref{eq:eval_comp} and 
\eqref{eq:ycomp}
holds in some neighborhood of $x^*$.
To write this mathematically, define
the functions,
\beq \label{eq:gmatch}
    g_c(x^*) := \left[ \begin{array}{c}
       w_c\phi'(w_ch_c^*+x^*) \\ h_c^* 
       \end{array} \right],
      \quad
    g_u(x^*) := \left[ \begin{array}{c}
       \phi'(h_u^*+f_ux^*+b_u) \\ c_uh_u^* 
       \end{array} \right],
\eeq
where, for a given $x^*$,
$h_u^*$ and $h_c^*$ are the solutions
to the fixed point equations \eqref{eq:hfix_comp}.
We must have that $g_c(x^*)=g_u^*(x^*)$
for all $x^*$ in some neighborhood.
Taking derivatives of \eqref{eq:gmatch}
and using the fact that $\phi(z)$ being a sigmoid, one can show that this matching
can only occur when,
\[
    w_c = 1, \quad b_u = 0, \quad c_u = 1.
\]
This is a contradiction
since we have assumed that the RNN
system is contractive which
requires $|w_c|=1$.
\end{document}